\theoremstyle{plain}
\newtheorem{theorem}{Theorem}[section]
\newtheorem{lemma}[theorem]{Lemma}
\theoremstyle{definition}
\newtheorem{assumption}[theorem]{Assumption}
\theoremstyle{remark}
\newcommand{\E}{\mathbb{E}}
\newcommand{\R}{\mathbb{R}}
\newcommand{\N}{\mathbb{N}}
\newcommand{\prob}{\mathbb{P}}
\newcommand{\indep}{\perp \!\!\! \perp}
\def\maketag@@@#1{\hbox{\m@th\normalfont\normalsize#1}}
\icmltitlerunning{Meta-Learners for Partially-Identified Treatment Effects Across Multiple Environments}
\begin{document}

\twocolumn[
\icmltitle{Meta-Learners for Partially-Identified Treatment Effects Across Multiple Environments}



\icmlsetsymbol{equal}{*}

\begin{icmlauthorlist}
\icmlauthor{Jonas Schweisthal}{equal,lmu,mcml}
\icmlauthor{Dennis Frauen}{equal,lmu,mcml}
\icmlauthor{Mihaela van der Schaar}{cam}
\icmlauthor{Stefan Feuerriegel}{lmu,mcml}
\end{icmlauthorlist}

\icmlaffiliation{lmu}{LMU Munich, Germany}
\icmlaffiliation{mcml}{Munich Center for Machine Learning (MCML),  Germany}
\icmlaffiliation{cam}{University of Cambridge, UK}

\icmlcorrespondingauthor{Jonas Schweisthal}{jonas.schweisthal@lmu.de}
\icmlcorrespondingauthor{Dennis Frauen}{frauen@lmu.de}

\icmlkeywords{Machine Learning, ICML, causal inference, partial identification, unobserved confounding, instrumental variables}

\vskip 0.3in
]



\printAffiliationsAndNotice{\icmlEqualContribution} 

\begin{abstract}
Estimating the conditional average treatment effect (CATE) from observational data is relevant for many applications such as personalized medicine. Here, we focus on the widespread setting where the observational data come from multiple environments, such as different hospitals, physicians, or countries. Furthermore, we allow for violations of standard causal assumptions, namely, overlap within the environments and unconfoundedness. To this end, we move away from point identification and focus on \emph{partial identification}. Specifically, we show that current assumptions from the literature on multiple environments allow us to interpret the environment as an instrumental variable (IV). This allows us to adapt bounds from the IV literature for partial identification of CATE by leveraging treatment assignment mechanisms across environments. Then, we propose different model-agnostic learners (so-called meta-learners) to estimate the bounds that can be used in combination with arbitrary machine learning models. We further demonstrate the effectiveness of our meta-learners across various experiments using both simulated and real-world data. Finally, we discuss the applicability of our meta-learners to partial identification in instrumental variable settings, such as randomized controlled trials with non-compliance. 
\end{abstract}

\section{Introduction}
\label{sec:intro}

Estimating conditional average treatment effects (CATEs) from observational data is relevant for
many applications. Examples include medicine \cite{Glass.2013, feuerriegel2024causal}, economics \cite{Angrist.1990, kuzmanovic2024causal}, or marketing \cite{Varian.2016}. For example, medical professionals are interested in leveraging electronic health records to personalize care by understanding the estimated CATE of treatments. 

In this paper, we are interested in the CATE in a setting where we have access to observational data, that are collected from \emph{multiple environments}. Furthermore, we allow for \emph{violations of standard causal assumptions}, namely, overlap within the environments and unconfoundedness. Our setting is particularly relevant for medical applications for two reasons. 

\emph{First}, CATE estimation in practice often involves settings where the observational data come from  \emph{multiple environments} \cite{Shi.2021}. Common examples in medicine are settings where patient data come from multiple hospitals, multiple physicians, or multiple countries \citep[e.g.,][]{Huang.2022}. As a result, each environment is characterized by unique patient demographics (e.g., a specialized hospital may have more severe cases of disease) and/or unique treatment policies (e.g., the default treatment option may vary across countries). 

\begin{figure*}[t]
  \centering
  \includegraphics[width=0.95\linewidth]{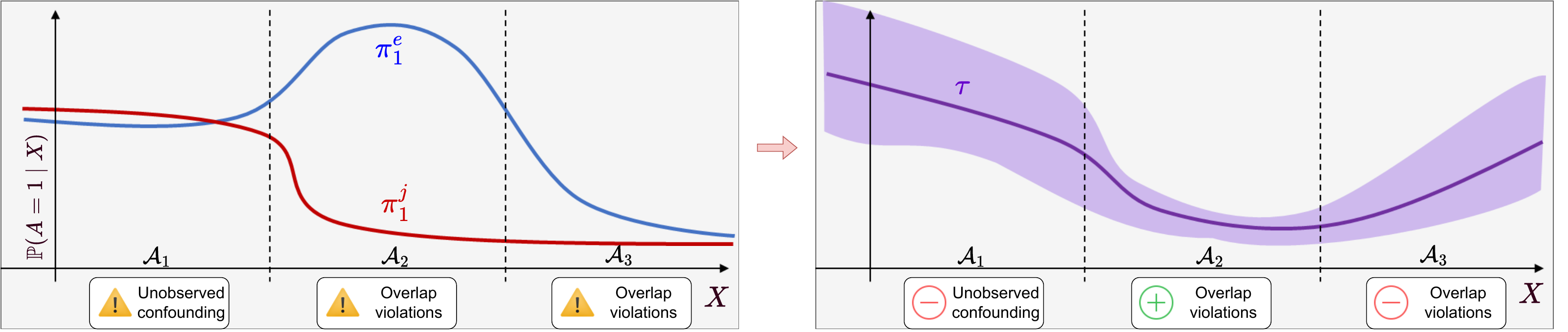}
  \caption{Intuition for our bounds. \emph{Left:} Two propensity scores \textcolor{blue}{$\pi^e_1(x) = \prob(A^e = 1 \mid X^e = x)$} and \textcolor{BrickRed}{$\pi^j_1(x) = \prob(A^j = 1 \mid X^j = x)$} corresponding to different environments \textcolor{blue}{$e$} and \textcolor{BrickRed}{$j$} are plotted over observed confounders $X$. Large values of \textcolor{blue}{$\pi^e_1(x)$} and \textcolor{BrickRed}{$\pi^j_1(x)$} correspond to a high probability of receiving treatment and vice versa. \emph{Right:} CATE $\tau(x)$ together with bounds depending on violations of overlap and unconfoundedness. In region $\mathcal{A}_1$, no overlap violations occur, leading to wide bounds for the CATE $\tau$ due to potential unobserved confounding. In region $\mathcal{A}_2$, overlap violations occur at \emph{opposite ends} across environments, leading to tight bounds for $\tau$. In region $\mathcal{A}_3$, overlap violations occur on the same end across environments, leading to wide bounds for $\tau$ due to a lack of data for treated individuals.}
  \label{fig:intro}
\end{figure*}

\emph{Second}, identification of the CATE from observational data is challenging and typically requires strong \emph{assumptions} such as (i)~\emph{overlap (within environments)} and (ii)~\emph{unconfoundedness} \cite{Wager.2018}. (i)~Overlap ensures that each individual has a positive probability of receiving any treatment \cite{DAmour.2021} within each environment. (ii)~Unconfoundedness implies that all potential confounders that influence both the treatment assignment and the outcome of interest are contained within the observational data. Under violations of these assumptions, the CATE is generally unidentifiable and can not be estimated consistently from observational data \cite{Pearl.2009}. Notwithstanding, \emph{violations} are likely to occur in settings with multiple environments. (i)~When patient characteristics and/or treatment policies vary across environments, then it is likely that certain patient characteristics and/or certain treatments are rarely or even never observed in some environments (e.g., certain treatments are only administered by specialized physicians). As such, overlap is often violated. (ii)~Confounders such as socioeconomic status are often not recorded in medical studies \cite{Adler.2015}. As such, unconfoundedness is often violated. Consequently, standard CATE estimators may be biased and thus unreliable, motivating the need for methods to relax standard assumptions.

To address the challenges of the setting above, we move away from \emph{point} identification and focus on \emph{partial} identification. Partial identification means that one relaxes assumptions on the underlying data-generating process to estimate bounds for a causal query of interest \cite{Jesson.2021}. Knowing that the bounds are above or below zero is often sufficient for consequential decision-making \cite{Kallus.2019}. For example, knowledge about a positive treatment effect may be sufficient for a physician to prescribe a treatment. To the best of our knowledge, our work is unique in two ways. First, while there is rich literature on partial identification \citep[e.g.,][]{Manski.1990, Kilbertus.2020, Padh.2023}, no previous works allow for observational datasets from \emph{multiple environments}. Second, the \emph{derivation} of bounds for specific settings is a common theme in the partial identification literature \citep[e.g.,][]{Duarte.2023}, yet the effective estimation of such bounds is often not the focus. Here, a novelty of ours is that we provide flexible meta-learners that can be combined with arbitrary machine learning models for estimating such bounds in our setting. 

In this paper, we make three \textbf{contributions}:\footnote{Code is available at \url{https://github.com/JSchweisthal/BoundMetaLearners}.} (1)~We show that under similar assumptions as in related work \citep{Kallus.2018e}, the variable representing the environment can be interpreted as an instrument variable (IV). Then, we generalize previous results from the partial identification literature for binary IVs \cite{Balke.1997, swanson2018partial} to our setting to obtain bounds for CATE. (2)~We propose novel meta-learners to \emph{estimate} the bounds from observational data. Importantly, our meta-learners are model-agnostic and can be used in combination with any machine learning model. (3)~We provide theoretical results for our meta-learners by showing consistency and double robustness properties. Finally, we confirm the effectiveness of our meta-learners by performing various experiments using both simulated and real-world data.

We now give an \textbf{intuition} behind our bounds, and, thereby, we explain that access to a discrete variable representing the environment can indeed help with partial identification of the CATE, even under violations of overlap and unobserved confounding. Fig.~\ref{fig:intro} shows two different environments, namely, \textcolor{blue}{hospital $e$} and \textcolor{BrickRed}{hospital $j$}. Both have different treatment assignment mechanisms (i.e., \textcolor{blue}{$\pi_1^e$} and \textcolor{BrickRed}{$\pi_1^j$}, respectively), which are plotted against some patient characteristic $X$ (e.g., patient age). Let us highlight three regions: In $\mathcal{A}_1$ and $\mathcal{A}_3$, the treatment assignments are similar, because of which, unfortunately, we have wide bounds for the CATE. However, in $\mathcal{A}_2$, the overlap assumption is violated in both environments, but at \emph{opposite ends}: in \textcolor{blue}{hospital $e$}, middle-aged patients are almost always treated, while, in \textcolor{BrickRed}{hospital $j$}, they are rarely treated. Hence, we can overcome the limited overlap by \emph{combining} data across both environments. Furthermore, we can deduce claims regarding the ``strength'' of unobserved confounding in the region $\mathcal{A}_2$, as there can \emph{not} exist an unobserved variable with a strong influence on the treatment assignment because this would imply more observed variation in the prescribed treatments. Our intuition is that violations of overlap and unconfoundedness \emph{exclude each other}, so that we obtain tight bounds for the CATE.

\section{Related work}
\label{sec:rw}

Our work draws from multiple streams of related literature, which we list in the following. Additional literature is in Appendix~\ref{app:rw}.

\textbf{Treatment effect estimation across different environments:} Several works focus on treatment effect estimation from different environments. (i)~One stream focuses on randomized data, typically to combine randomized and observational datasets \citep[e.g.,][]{Kallus.2018e,Athey.2020,Ghassami.2022b,Hatt.2022, Imbens.2022}. 
(ii)~Another stream uses purely observational data from different environments. For example, previous works offer a theory for the transportability of causal effects across different environments \citep{Bareinboim.2016} or methods for transfer learning \cite{Bica.2022} and for detecting unobserved confounding \cite{Karlsson.2023}. However, none of these works estimates \emph{bounds} for the CATE across multiple environments \emph{under violations of assumptions}. 

\textbf{Meta-learners for CATE estimation:} Model-agnostic meta-learners, in particular two-stage learners, are commonly used in the standard setting for CATE estimation without unobserved confounding and achieve state-of-the-art performance both theoretically and empirically \citep{Foster.2019, Curth.2021}. The basic idea of two-stage learners is to estimate the CATE directly via an additional second-stage regression using a constructed pseudo-outcome. Several learners have been developed, which primarily differ in terms of the pseudo-outcome used: (i)~the regression-adjustment learner (called RA- or X-learner) \cite{Kunzel.2019, Curth.2021}; the inverse-propensity weighted learner (IPW-learner) \cite{Curth.2021}; the doubly robust learner (DR-learner) \citep{Kennedy.2023}; and the R-learner \cite{Nie.2021}. There also exist some efforts to generalize the standard meta-learners to different causal inference settings \citep[e.g., with instrumental variables,][]{Syrgkanis.2019, Frauen.2023b}. However, these meta-learners are all designed for estimating CATE under \emph{point} identification but \underline{not} \emph{partial} identification. 

We are only aware of one work that proposes a meta-learner for partial identification of CATE, namely, the B-learner \citep{Oprescu.2023}. However, the B-learner has been proposed for causal sensitivity analysis, a setting with a fundamentally different set of assumptions: the B-learner requires prior knowledge that limits the maximum strength of the confounding. In contrast, we do \emph{not} make assumptions that limit the strength of the confounding but instead assume that observational data is collected from multiple environments. Because of that, the B-learner is \underline{not} applicable in our work.  

\textbf{Partial identification of treatment effects:}
In the context of causal inference from a single observational dataset with unobserved confounding, several works have proposed procedures for partial identification of treatment effects. 

\citet{Manski.1990} was the first to obtain bounds for (conditional) average treatment effects under the assumption of bounded outcomes. \citet{Balke.1997} derived tighter bounds for average treatment effects in a similar setting than ours, but with binary variables. \citet{Duarte.2023} proposed a general procedure for deriving bounds in discrete structural causal models. In settings with continuous variables, recent works leverage machine learning approaches to learn bounds \cite{Kilbertus.2020, Hu.2021, Balazadeh.2022, Chen.2023, Padh.2023}. Note that these methods focus primarily on \emph{identification} results (i.e., deriving bounds), while we develop meta-learners for \emph{estimating} bounds. An exception is the estimator from \cite{levis2023covariate}, which however targets bounds for \emph{average} treatment effects.

Finally, a related but distinct stream of literature leverages machine learning for causal sensitivity analysis to obtain bounds under so-called sensitivity models, which make assumptions that limit the strength of unobserved confounding \cite{Jesson.2021, Dorn.2022, Dorn.2022b, Yin.2022, Frauen.2023c, Jin.2023, frauen2024neural}. Again, we do not make an assumption that limits the strength of unobserved confounding, because of which the tasks/bounds are not comparable. 
On top of that, to the best of our knowledge, none of these works considers effective \emph{estimation} of bounds for CATE via meta-learners.

\textbf{Research gap:} We focus on CATE estimation from observational data across multiple environments under violations of standard assumptions. To the best of our knowledge, we are the first to study partial identification in settings with \emph{multiple} environments. Further, we are the first to develop effective \emph{meta-learners} in our partial identification setting to estimate bounds. 

\section{Problem setup}
\label{sec:prob_setup}

\textbf{Setting:} We consider a setting with multiple environments (e.g., different hospitals, different physicians, or different countries), which, for simplicity, we denote by a discrete environment variable $E \in \{0, \dots, k\}$. We further have access to an observational dataset $\mathcal{D} = \left\{e_i, x_i, a_i, y_i\right\}_{i=1}^{n}$ of size $n$. The data are sampled i.i.d. from a population $(E, X, A, Y) \sim \mathbb{P}$, with patient covariates $X \in \mathcal{X} \subseteq \R^p$, discrete treatments $A \in \mathcal{A} \subseteq \N$, and bounded outcomes $Y \in \mathcal{Y} \subseteq [s_1, s_2] \subseteq \R$. The causal structure is shown in Fig.~\ref{fig:causal_graph}. In particular, we assume that $E$ is an instrumental variable that has an effect on the treatment $A$ but no direct effect on the outcome $Y$ that is not mediated via $A$. 

Our setting is relevant for a variety of practical applications where observational data are collected from different environments. Consider electronic healthcare records from patients, where $X$ includes patient characteristics such as age and gender, $A$ is an indicator of whether a patient has been prescribed medical treatment, and $Y$ is a health outcome such as heart rate or blood pressure. Further, let $E$ correspond to an environment in which the electronic health records are collected such as the hospital. Such settings are common in medical research to understand the effectiveness of treatments \citep[e.g.,][]{Huang.2022}. 

\begin{figure}[ht]
  \centering
  \includegraphics[width=0.95\linewidth]{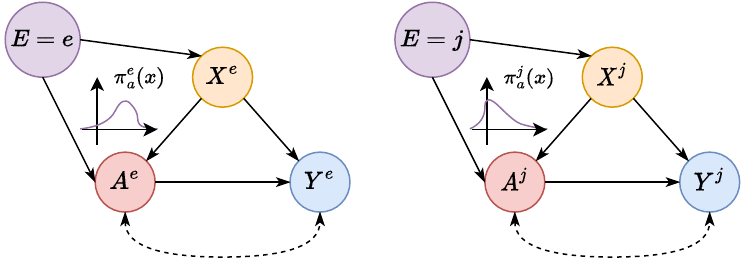}
  \caption{Causal graphs for different environments $e$ and $j$. We assume that the causal structure between $e$ and $j$ remains unchanged but we allow for different treatment assignment mechanisms (propensity scores) $\pi^e_a(x)$ and $\pi^j_a(x)$.  The dotted arrows indicate potential unobserved confounding.}
  \label{fig:causal_graph}
\end{figure}

\textbf{Notation:} We define the environment-specific \emph{propensity score} as $\pi^e_a(x) = \mathbb{P}(A = a \mid X = x, E = e)$, which denotes the treatment assignment mechanism that is observed in environment $e$. Furthermore, we define the environment-specific \emph{response surfaces} via $\mu^e_a(x) = \E[Y \mid X = x, A = a, E = e]$. Importantly, we assume that both the propensity scores and the response surfaces may differ across environments, i.e., $\pi^e_a(x) \neq \pi^j_a(x)$ and $\mu^e_a(x) \neq \mu^j_a(x)$ for environments $e \neq j$. For example, the standard of care might vary across countries, leading to different treatment policies and thus propensity scores. Finally, we define the environment probability as $\delta_e(x) = \prob(E = e \mid X = x)$.

\textbf{Assumptions:} We use the potential outcomes framework to formalize our causal inference problem \cite{Rubin.1974}. Let $Y^e(a) \in \mathcal{Y}$ denote the potential outcome in environment $E=e$ for a treatment intervention $A = a$. In this paper, we impose the following assumptions on the data-generating process across environments.
\begin{assumption}[Consistency across environments]\label{ass:consistency} For each environment $e$, observing $A = a$ implies $Y^e(a) = Y$.
\end{assumption}
\begin{assumption}[Environment-agnostic oracle response surfaces]\label{ass:main}
We assume that $\E[Y^e(a) \mid X = x, E = e] = \E[Y^j(a) \mid X= x, E = j]$ holds for all environments $0 \leq e, j \leq k$, treatments $a \in \mathcal{A}$, and covariates $x \in \mathcal{X}$.
\end{assumption}
\begin{assumption}[Common support]\label{ass:support}
We assume that $\delta_e(x) > 0$ holds for all environments $0 \leq e \leq k$ and covariates $x \in \mathcal{X}$.
\end{assumption}
Assumption~\ref{ass:consistency} (consistency) excludes spillover effects across individuals and is a standard assumption in the causal inference literature \cite{Shalit.2017, Wager.2018}. 
Assumption~\ref{ass:main} implies that a treatment results in the same expected potential outcome across environments.\footnote{Prior literature often defines instrumental variables $E$ via $Y(a) \indep E \mid X$ \cite{Pearl.2009}. Note that this assumption is stronger than our Assumption~\ref{ass:main}. Hence, our results can be applied to any setting where $E$ is a discrete instrument in the classical sense, e.g., RCTs with non-compliance.} In particular, the outcome-generating causal mechanisms coincide in expectation for each environment. 
Similar assumptions are commonly made in the literature to ensure the transferability of treatment effect estimates. Finally, Assumption~\ref{ass:support} ensures that the populations of all environments are supported on a common domain, i.e., there are no types of patients that only exist in certain environments. Thus, we are consistent with prior literature on causal inference using combined datasets \cite{Kallus.2018e, Shi.2021, Hatt.2022}. 

\textbf{Target estimand:}
Using Assumption~\ref{ass:main}, we can define the oracle response surface as $\widetilde{\mu}_{a}(x) = \E[Y^e(a) \mid X = x, E = e]$, which is independent of the environment $e$. We are interested in the (environment-agnostic) conditional average treatment effect (CATE) 
\vspace{-0.1cm}
\begin{equation}\label{eq:cate_def}
\tau_{a_1, a_2}(x) = \widetilde{\mu}_{a_1}(x) - \widetilde{\mu}_{a_2}(x),   
\end{equation}
which quantifies the difference in expected potential outcomes for two treatments $a_1, a_2 \in \mathcal{A}$. The CATE is of interest in various applications, e.g., personalized medicine \cite{feuerriegel2024causal}, because it captures treatment effect heterogeneity by conditioning on the covariates $x$.

\textbf{Violations of standard causal inference assumptions:} 
The standard causal inference literature imposes the following two additional assumptions to identify the CATE $\tau_{a_1, a_2}(x)$ from the observational data distribution \cite{Wager.2018, Curth.2021}: 
\vspace{-0.3cm}
\begin{enumerate}[label=(\roman*)]
    \item \emph{Overlap within envionments}: $\pi^e_a(x) > 0$; and
\vspace{-0.3cm}
    \item \emph{Unconfoundedness}: $Y^e(a) \indep A \mid X = x$
\end{enumerate}
\vspace{-0.3cm}
for all $x \in \mathcal{X}$ and $a \in \mathcal{A}$.

Overlap implies that each individual with covariates $x$ must have a non-zero probability of receiving every treatment $a$. Unconfoundedness implies that the observed covariates $X$ capture all confounding between the treatment $A$ and outcome $Y$. Under overlap and unconfoundedness, the oracle response functions are identified via 
$\widetilde{\mu}_{a}(x) = \mu^e_a(x)$ and, hence, coincide with their observed counterparts for all environments $e$ \cite{Shalit.2017}. In particular, the CATE is identified via $\tau_{a_1, a_2}(x) = \mu^e_{a_1}(x) - \mu^e_{a_2}(x)$.

However, both overlap and unconfoundedness are violated in many practical settings. (i)~The overlap violation is almost always violated in settings with high-dimensional covariates $X$, which is common in modern settings with available text/image data, or other electronic healthcare records \cite{DAmour.2021, schweisthal2024reliable}. (ii)~The unconfoundedness assumption is highly unrealistic in medical settings, where confounders such as socio-economic status are rarely recorded in electronic health records \cite{Adler.2015}. If either overlap or unconfoundedness (or both) are violated, neither response functions $\widetilde{\mu}_{a}(x) \neq \mu^e_a(x)$ nor the CATE are identified. Then, unbiased estimation is impossible \cite{Pearl.2009}. 

In the following, we relax assumptions related to overlap and unconfoundedness and focus on \emph{partial identification}.

\section{Partial identification of CATE}
\label{sec:method}

We now leverage the idea that the environment variable $E$ acts as an instrumental variable (IV) and generalize results from the literature on partial identification for IVs \citep{Balke.1997, swanson2018partial} that hold for binary variables and average treatment effects. To do so, we first recall a known result from the literature on bounding treatment effects without multiple environments.
\begin{lemma}\label{lem:manski}\cite{Manski.1990}
For any environment $e$, the oracle response surfaces are bounded under Assumption~\ref{ass:consistency} via
\begin{equation}
    \widetilde{\mu}_{a}(x) \leq \pi^e_a(x) \, \mu^e_a(x) + (1 - \pi^e_a(x)) \, s_2
\end{equation}
and
\begin{equation}
    \widetilde{\mu}_{a}(x) \geq \pi^e_a(x) \, \mu^e_a(x) + (1 - \pi^e_a(x)) \, s_1,
\end{equation}
where $[s_1, s_2]$ denotes the support of $Y$.
\end{lemma}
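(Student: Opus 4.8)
The plan is to apply the law of total expectation to the oracle response surface by partitioning on the observed treatment, exploit consistency on the identified piece, and bound the remaining counterfactual piece via the support of the outcome. Concretely, I would start from the definition $\widetilde{\mu}_{a}(x) = \E[Y^e(a) \mid X = x, E = e]$ and condition on the events $\{A = a\}$ and $\{A \neq a\}$, which yields
\[
\widetilde{\mu}_{a}(x) = \pi^e_a(x)\, \E[Y^e(a) \mid X = x, E = e, A = a] + (1 - \pi^e_a(x))\, \E[Y^e(a) \mid X = x, E = e, A \neq a],
\]
where the mixing weights are exactly $\pi^e_a(x)$ and $1-\pi^e_a(x)$ by the definition of the environment-specific propensity score.

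Next I would treat the two conditional expectations separately. On the event $\{A = a\}$, Assumption~\ref{ass:consistency} (consistency) gives $Y^e(a) = Y$, so the first term equals $\E[Y \mid X = x, A = a, E = e] = \mu^e_a(x)$, which is identified from the observational distribution. The second term, $\E[Y^e(a) \mid X = x, E = e, A \neq a]$, is the counterfactual quantity we never observe; here I would invoke only that the potential outcome lies in the support, i.e.\ $Y^e(a) \in [s_1, s_2]$, so this conditional expectation is bounded below by $s_1$ and above by $s_2$. Substituting the worst-case value $s_2$ into the decomposition — with the weight $(1 - \pi^e_a(x)) \geq 0$ preserving the inequality direction — gives the claimed upper bound, and substituting $s_1$ gives the lower bound. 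This closes the argument.

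The main obstacle is conceptual rather than computational: one must recognize that the \emph{entire} identification gap is concentrated in the single unobservable term $\E[Y^e(a) \mid A \neq a]$, and that the only admissible constraint on it is the a priori bound $Y^e(a) \in [s_1, s_2]$; everything else is routine algebra. I would also note, as a sanity check, that $\pi^e_a(x)$ may take the boundary values $0$ or $1$ without breaking the argument — the decomposition remains valid and the bounds simply collapse (to $s_1$/$s_2$ when $\pi^e_a(x)=0$, or to $\mu^e_a(x)$ when $\pi^e_a(x)=1$) — which is precisely what makes the lemma informative in the regimes of overlap violation that the paper targets.
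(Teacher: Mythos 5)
Your proof is correct and follows essentially the same route as the paper's: decompose $\widetilde{\mu}_a(x)$ by conditioning on $\{A=a\}$ versus $\{A\neq a\}$, identify the factual term via consistency as $\mu^e_a(x)$, and bound the unobservable counterfactual term by the support endpoints $s_1, s_2$. Your additional remarks on the nonnegativity of the weight and the boundary cases $\pi^e_a(x) \in \{0,1\}$ are correct but not needed beyond what the paper already does.
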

\begin{proof}
See Appendix~\ref{app:proofs}.
\end{proof}
The intuition behind Lemma~\ref{lem:manski} is as follows: Whenever we have overlap violations and the propensity score $\pi^e_a(x)$ is large for a treatment $a$, most of the randomness in the treatment is removed by conditioning on the observed covariates $x$. Hence, there can \emph{not} be unobserved confounders that have a strong influence on the treatment assignment. This is reflected in the bounds from Lemma~\ref{lem:manski}, which become tighter whenever $\pi^e_a(x)$ becomes larger.

Previously, \citet{Manski.1990} proposed to obtain an upper bound for the CATE $\tau_{a_1, a_2}(x) = \widetilde{\mu}_{a_1}(x) - \widetilde{\mu}_{a_2}(x)$ by combining the upper and lower bound from Lemma~\ref{lem:manski} for the different treatments $a_1$ and $a_2$. This results in
\begin{multline}
\label{eq:manski}
\tau_{a_1, a_2}(x) \leq \pi^e_{a_1}(x) \, \mu^e_{a_1}(x) + (1 - \pi^e_{a_1}(x)) \, s_2 \\
\quad - \pi^e_{a_2}(x) \, \mu^e_{a_2}(x) - (1 - \pi^e_{a_2}(x)) \, s_1   
\end{multline}
for the upper bound. An analogous result can be obtained for the lower bound by swapping the support points $s_1$ and $s_2$.

One drawback of the Manski bound in Eq.~\eqref{eq:manski} is that it is not particularly tight. When subtracting the lower bound from the upper bound, we obtain a constant tightness of $s_2 - s_1$, which does \emph{not} depend on the covariates $x$ and \emph{not} on the propensity score $\pi^e_a(x)$. In particular, the tightness equals the full support of the outcome variable $Y$. 

\textbf{Extending the Manski bounds:} The above motivates us to leverage the different propensity scores across environments to obtain tighter bounds (see Fig.~\ref{fig:intro}). By combining the bounds from Lemma~\ref{lem:manski} for the oracle response functions across environments, we derive the following result.

\begin{theorem}\label{thrm:bounds}
Under Assumptions~\ref{ass:consistency} and \ref{ass:main}, the CATE is bounded via
\begin{equation}
b^-(x) \leq \tau_{a_1, a_2}(x) \leq b^+(x),
\end{equation}
where
\begin{equation}\label{eq:bounds}
    b^+(x) = \min_{e, j} b^+_{e,j}(x) \quad \text{and} \quad b^-(x) = \max_{e, j} b^-_{e,j}(x) 
\end{equation}
with
\begin{multline}
    b^+_{e,j}(x) =\pi^e_{a_1}(x) \mu^e_{a_1}(x) + (1 - \pi^e_{a_1}(x)) s_2  \\ 
    \quad - \pi^j_{a_2}(x) \mu^j_{a_2}(x) - (1 - \pi^j_{a_2}(x)) s_1 ,
\end{multline}
\vspace{-0.4cm}
\begin{multline}
    b^-_{e,j}(x) = \pi^e_{a_1}(x) \mu^e_{a_1}(x) + (1 - \pi^e_{a_1}(x)) s_1 
    \\  
    \quad - \pi^j_{a_2}(x) \mu^j_{a_2}(x) - (1 - \pi^j_{a_2}(x)) s_2.
\end{multline}
\end{theorem}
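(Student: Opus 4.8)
The plan is to combine the single-environment bounds from Lemma~\ref{lem:manski} with the environment-agnosticity of the oracle response surfaces provided by Assumption~\ref{ass:main}. The key observation is that, because $\widetilde{\mu}_a(x) = \E[Y^e(a) \mid X = x, E = e]$ does not depend on $e$, the Manski bounds in Lemma~\ref{lem:manski} hold \emph{simultaneously} for every environment. In particular, when bounding $\tau_{a_1,a_2}(x) = \widetilde{\mu}_{a_1}(x) - \widetilde{\mu}_{a_2}(x)$, I am free to bound the minuend $\widetilde{\mu}_{a_1}(x)$ using one environment $e$ and the subtrahend $\widetilde{\mu}_{a_2}(x)$ using a possibly \emph{different} environment $j$. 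This decoupling is exactly what yields an improvement over the single-environment bound in Eq.~\eqref{eq:manski}.

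First I would fix an arbitrary pair $(e,j)$. Applying the upper bound of Lemma~\ref{lem:manski} to $\widetilde{\mu}_{a_1}(x)$ in environment $e$ and the lower bound to $\widetilde{\mu}_{a_2}(x)$ in environment $j$ gives
\begin{align*}
\widetilde{\mu}_{a_1}(x) &\leq \pi^e_{a_1}(x)\,\mu^e_{a_1}(x) + (1 - \pi^e_{a_1}(x))\,s_2, \\
\widetilde{\mu}_{a_2}(x) &\geq \pi^j_{a_2}(x)\,\mu^j_{a_2}(x) + (1 - \pi^j_{a_2}(x))\,s_1.
\end{align*}
Subtracting the second inequality from the first yields $\tau_{a_1,a_2}(x) \leq b^+_{e,j}(x)$. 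Since $(e,j)$ was arbitrary and all such inequalities hold at once, I would then take the minimum over all pairs to conclude $\tau_{a_1,a_2}(x) \leq \min_{e,j} b^+_{e,j}(x) = b^+(x)$. The lower bound $b^-(x)$ follows by the symmetric argument: use the lower Manski bound for $\widetilde{\mu}_{a_1}(x)$ and the upper Manski bound for $\widetilde{\mu}_{a_2}(x)$ (i.e., swap $s_1$ and $s_2$), obtain $\tau_{a_1,a_2}(x) \geq b^-_{e,j}(x)$ for every pair, and take the maximum over all $(e,j)$.

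There is no genuine analytical obstacle here; the proof is a short chain of inequalities. The one conceptual point that must be stated carefully is the justification for mixing environments — namely, that Assumption~\ref{ass:main} is precisely what licenses applying the bound for $a_1$ in environment $e$ and the bound for $a_2$ in environment $j$ to the \emph{same} target estimand $\tau_{a_1,a_2}(x)$. I would make this step explicit, since it is the crux of the generalization over Manski's single-environment result and the source of the tighter, covariate-dependent bounds illustrated in Fig.~\ref{fig:intro}.
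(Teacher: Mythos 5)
Your proposal is correct and follows essentially the same route as the paper's own proof: apply Lemma~\ref{lem:manski} to $\widetilde{\mu}_{a_1}(x)$ in environment $e$ and to $\widetilde{\mu}_{a_2}(x)$ in environment $j$, subtract, and take the minimum (resp.\ maximum) over all pairs $(e,j)$. Your explicit remark that Assumption~\ref{ass:main} is what licenses mixing environments is the same key observation the paper relies on, just stated more carefully.
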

\begin{proof}
See Appendix~\ref{app:proofs}.
\end{proof}
\vspace{-0.3cm}

The bounds from Theorem~\ref{thrm:bounds} are a generalization of the so-called \emph{natural bounds} from the literature on partial identification with IVs and discrete variables \cite{swanson2018partial}. Unlike other bounds such as the ones from \citet{Balke.1997}, these can be straightforwardly extended to continuous outcomes and thus our multiple environments setting via Theorem~\ref{thrm:bounds}. We refer to \citet{swanson2018partial} for a detailed overview on bounds in instrumental settings (with discrete variables). Finally, another advantage of the natural bounds is that they are in closed-form and thus allow for estimation via meta-learners (see Sec.~\ref{sec:meta_learners}).

\textbf{Tightness of the bounds:} Note that our bounds from Theorem~\ref{thrm:bounds} satisfy
\begin{multline}\label{eq:bound_width}
b^+(x) - b^-(x) \\ 
\leq \min_{e, j}\left\{(s_2 - s_1) (2 - \pi^e_{a_1}(x) - \pi^j_{a_2}(x))\right\},
\end{multline}
which shows that they improve on the Manski bounds from Eq.~\eqref{eq:manski} in terms of tightness whenever $\pi^e_{a_1}(x) + \pi^j_{a_2}(x) > 1$. 

\textbf{Remark:} Our bounds become tight whenever there exist two environments $e$ and $j$ so that both $\pi^e_{a_1}(x)$ and $\pi^j_{a_2}$ are large. In other words, we have a rather surprising implication: \emph{violations of overlap can be beneficial to increase the tightness} of our bounds as long as the respective treatment assignments across environments change sufficiently. 

Intuitively, the degree to which the propensity scores across environments differ can also be viewed as the strength of dependence between the environment variable $E$ and the treatment $A$. In the literature on IV regression, it is well-known that so-called weak instruments $E$ that are only weakly correlated with $A$ may lead to biased estimates \citep{Angrist.2008}. Our bounds from Theorem~\ref{thrm:bounds} automatically account for this issue and become wider whenever $E$ is a weak instrument and the propensity scores $\pi^e_{a_1}(x)$ and $\pi^j_{a_2}$ across environments become similar.

\section{Meta-learners for estimating the bounds}\label{sec:meta_learners}

We now develop meta-learners for estimating our bounds from Theorem~\ref{thrm:bounds}, which denote quantities in population. In the following, we thus propose model-agnostic meta-learners that can be used in combination with any machine-learning method. In this section, we describe the estimation of our bounds for binary treatments $A \in \{0, 1\}$, so that $a_1 = 1$ and $a_2 = 0$. We also provide a theoretical analysis (Sec.~\ref{sec:theoretical_results}) and an implementation using neural networks (Sec.~\ref{sec:neural}).

For simplicity, we will focus on obtaining estimators $\hat{b}^+(x)$ for the upper bound $b^+(x)$. Estimators for the lower bound $b^-(x)$ can be obtained the same way by interchanging $s_1$ and $s_2$ (see Theorem~\ref{thrm:bounds}).

\subsection{Na{\"i}ve plug-in learner}

A straightforward way to obtain an estimator for $b^+(x)$ is the so-called na{\"i}ve \emph{plug-in} approach. In this, we first obtain estimators $\hat{\mu}^e_{a}(x)$ and $\hat{\pi}_a^e(x)$  of the \emph{nuisance functions} $\mu^e_{a}(x)$ and $\pi_a^e(x)$ for all $a, e \in \{0,1\}$. Note that estimating $\mu^e_{a}(x)$ is a regression task and estimating $\pi_a^e(x)$ is a classification task, which means that off-the-shelf machine-learning algorithms can be applied. Then, we directly plug the estimated nuisance functions into Eq.~\eqref{eq:bounds}, which yields
\begin{multline}\label{eq:plugin_learner}
    \hat{b}^+(x) =\min_{e,j} \Big\{\hat{\pi}_1^e(x) \hat{\mu}^e_{1}(x) + \hat{\pi}_0^e(x)  \, s_2  \\  
    - \hat{\pi}_0^j(x) \, \hat{\mu}^j_{0}(x) - \hat{\pi}_1^j(x) s_1\Big\}.
\end{multline}
We call the estimator from Eq.~\eqref{eq:plugin_learner} the \emph{na{\"i}ve plugin-learner}. 

The general approach behind plug-in learners can suffer from a so-called plug-in bias \citep{Kennedy.2022} that can limit estimation performance. As a remedy, we develop novel two-stage learners for our task in the following.

\subsection{Two-stage learners}\label{sec:two-stage-learners}

We now aim to address the drawbacks of plug-in learners by proposing so-called \emph{two-stage learners} that directly estimate the bounds from Theorem~\ref{thrm:bounds}. While the plug-in learner plugs the estimated nuisance function into Eq.~\eqref{eq:plugin_learner} to estimate $b^+_{e,j}(x)$, we now aim to estimate $b^+_{e,j}(x)$ directly via a second-stage learner for all environment combinations $e, j \in \{0, 1\}$.  If the bounds $b^+_{e,j}(x)$ are easier to estimate directly than the corresponding nuisance functions (e.g., due to cancellation effects), such an approach should perform better than the plug-in learner. We do this by constructing pseudo-outcomes that are equal to the bound we want to estimate in expectation, which then can be used as a second-stage regression objective. As a result, we obtain novel two-stage learners for estimating $b^+_{e,j}(x)$ and thus $b^+(x)$ (instead of the non-identifiable CATE $\tau_{1, 0}(x)$). 

\textbf{Types of learners:} We distinguish between two different cases: (1)~within-environment bounds $b^+_{e,e}(x)$, which combine the response function bounds from Lemma~\ref{lem:manski} within the same environment $e$, and (2)~cross-environment bounds $b^+_{e,j}(x)$, which combine the response function bounds across different environments $e \neq j$. Consequently, we yield two different approaches for the second-stage learners, which we call (1)~\textbf{WB-learner} and (2)~\textbf{CB-learner} (where the latter comes in different variants called CB-PI, CB-RA-, CB-IPW-, and CB-DR-learner). Once all second-stage learners $\hat{b}^+_{e,j}(x)$ are fitted, the within-environment bounds and the cross-environment bounds are combined via $\hat{b}^+(x) = \min_{e, j} \hat{b}^+_{e,j}(x)$ to obtain the final bound estimator for CATE. The full procedure is shown in Algorithm~\ref{alg:meta_algorithm}. 

\subsubsection{WB-learner} Whenever we consider a single environment $e=j$, we define the pseudo-outcome
\begin{equation}\label{eq:wb_learner}
    \hat{B}^{+\mathrm{WB}}_{e} = \mathbbm{1}\{E=e\} \left(A Y + (1-A)s_2 - (1 - A)Y - A s_1 \right).
\end{equation}
Then, we use the pseudo-outcome $\hat{B}^{+\mathrm{WB}}_{e}$ to estimate $\hat{b}^+_{e,e}(x) = \hat{\E}[\hat{B}^+_{e} \mid X = x, E = e]$ via a second-stage learner $\hat{\E}$ conditional on $E = e$. That is, we only use the data from environment $e$ for the estimator $\hat{b}^+_{e,e}(x)$. We call this the \emph{within-environment bound-learner} (WB-learner).

\subsubsection{CB-learners}\label{sec:cb-learners}

Whenever $e \neq j$, we need to combine data from different environments $e$ and $j$ to estimate the cross-environment bounds $b^+_{e,j}(x)$. For this purpose, we show that the estimation of $b^+_{e,j}(x)$ can be cast into a standard CATE estimation problem with a transformed outcome and treatment variable $\widetilde{Y}$ and using the environment $E$ as a (multi-valued) treatment. That is, we define
\begin{multline}\label{eq:cb-pseudo-outcome}
    \widetilde{Y}^+_{e,j} = \mathbbm{1}\{E=e\} \left(AY + (1-A)s_2 \right) \\ + \mathbbm{1}\{E=j\} \left((1-A)Y + A s_1\right) ,
\end{multline}
and we denote the \emph{transformed response functions} as
${r}^+_{\ell}(x) = {\E}[\widetilde{Y}^+_{e,j} \mid X = x, E = \ell]$ for $\ell \in \{e, j\}$.

\textbf{High-level approach:} We proceed in two stages. In \textbf{stage 1}, we obtain estimators $\hat{r}^+_{\ell}$ and $\hat{\delta}_e(x)$ of the nuisance functions ${r}^+_{\ell}$ and $\delta_\ell(x)$. In \textbf{stage 2}, we treat the estimation of $b^+_{e,j}(x)$ as a standard CATE estimation task using $\hat{r}^+_{\ell}$ as estimated response functions and $\hat{\delta}_\ell(x)$ as the estimated propensity score. Then, we adopt the classical meta-learners (\mbox{plugin-,} \mbox{RA-,} \mbox{IPW-,} and DR-learner \cite{Curth.2020, Kennedy.2023b}) for multi-valued treatments \cite{acharki2023comparison} (i.e., our environments). For guarantees on this approach, we refer to Sec.~\ref{sec:theoretical_results}.

(i)~\textbf{CB-PI-learner}: The \emph{cross-environment-bound plugin-learner} is defined via
\begin{equation}
\hat{b}^{+, \mathrm{PI}}_{e,j}(x) = \hat{r}^+_{e}(x) - \hat{r}^+_{j}(x),
\end{equation}
which is analogous to the plugin-learner for standard CATE estimation \cite{Curth.2020}.

(ii)~\textbf{CB-RA-learner}: The \emph{cross-environment-bound regression-adjustment learner} is defined via the pseudo outcome
{\tiny
\begin{multline}
    \hat{B}^{+\mathrm{RA}}_{e, j} = \mathbbm{1}\{E=e\}\left(\widetilde{Y}^+_{e,j} - \hat{r}^+_{j}(x) \right)  
    + \mathbbm{1}\{E=j\} \left(\hat{r}^+_{e}(x) -\widetilde{Y}^+_{e,j}\right) \\ + \mathbbm{1}\{E\neq e\} \mathbbm{1}\{E\neq j\} \left(\hat{r}^+_{e}(x) - \hat{r}^+_{j}(x) \right),
\end{multline}}%
and the second-stage pseudo-outcome regression $\hat{b}^+_{e,j}(x) = \hat{\E}[\hat{B}^{+\mathrm{RA}}_{e, j} \mid X = x]$.

(iii)~\textbf{CB-IPW-learner}: The \emph{cross-environment-bound inverse propensity weighted learner} is defined via
{\scriptsize
\begin{equation}
    \hat{B}^{+\mathrm{IPW}}_{e, j} = \left(\frac{\mathbbm{1}\{E=e\}}{\hat{\delta}_e(x)}
     - \frac{\mathbbm{1}\{E=j\}}{\hat{\delta}_j(x)}\right) \widetilde{Y}^+_{e,j},
\end{equation}}%
and the second-stage pseudo-outcome regression $\hat{b}^+_{e,j}(x) = \hat{\E}[\hat{B}^{+\mathrm{IPW}}_{e, j} \mid X = x]$.

(iv)~\textbf{CB-DR-learner}: The \emph{cross-environment-bound doubly robust learner} is defined via
{\scriptsize
\begin{equation}
    \hat{B}^{+\mathrm{DR}}_{e, j} = \hat{B}^{+\mathrm{IPW}}_{e, j} + \left(1 - \frac{\mathbbm{1}\{E=e\}}{\hat{\delta}_e(x)}\right) \hat{r}^+_{e}(x) - \left(1 - \frac{\mathbbm{1}\{E=j\}}{\hat{\delta}_j(x)}\right) \hat{r}^+_{j}(x)
\end{equation}}%
and the second-stage pseudo-outcome regression $\hat{b}^+_{e,j}(x) = \hat{\E}[\hat{B}^{+\mathrm{DR}}_{e, j} \mid X = x]$.

(v)~\textbf{Further meta-learners}: We emphasize that, in principle, \emph{any} existing meta-learner for CATE estimation can be applied for estimating the cross-environment bounds. Further learners not specifically discussed here include the U-learner \cite{Kunzel.2019} and the R-learner \cite{Nie.2021} (in the case of a binary environmental variable $E$).

\vspace{-0.3cm}

\begin{algorithm}\label{alg:meta_algorithm}
\DontPrintSemicolon
\scriptsize
\caption{Two-stage learners for estimating bounds}
\label{alg:learning_full}
\SetKwInOut{Input}{Input}
\SetKwInOut{Output}{Output}
\Input{~\mbox{observational data $(E, X, A, Y)$, method $m \in \{\mathrm{RA}, \mathrm{IPW}, \mathrm{DR}\}$}}
\Output{estimated upper bound $\hat{b}^+(x)$ (for $\hat{b}^-(x)$, interchange $s_1$ and $s_2$)}
\tcp{Stage 1 (nuisance estimation)}
$\hat{r}^+_{\ell}(x) \gets \hat{\E}[\widetilde{Y}^+_{e,j} \mid X = x, E = \ell]$\;
$\hat{\delta}_\ell(x) \gets \hat{\prob}(E = \ell \mid X = x)$\;
\tcp{Stage 2}
 \For{$e, j \in \{0, \dots, k\}$}{
 \uIf{$e = j$}{
    $\hat{b}^+_{e,e}(x) = \hat{\E}[\hat{B}^{+\mathrm{WB}}_{e} \mid X = x, E = e]$\;
  }
  \Else{
    $\hat{b}^+_{e,j}(x) = \hat{\E}[\hat{B}^{+m}_{e, j} \mid X = x]$\;
  }
}
\tcp{Final bound}
$\hat{b}^+(x) = \min_{e, j} \hat{b}^+_{e,j}(x)$\;
\end{algorithm}

\subsection{Theoretical guarantees}
\label{sec:theoretical_results}

The following result analyzes the consistency of our proposed second-stage learners depending on whether the nuisance parameters are correctly specified.

\begin{theorem}[Consistency and double robustness]\label{thrm:consistency}
The meta-learners are consistent estimators of the within and cross-environment bounds. For the WB-learner, it holds that
\begin{equation}
    b^+_{e,e}(x) = \E[\hat{B}^{+\mathrm{WB}}_{e, e} \mid X = x, E = e]
\end{equation}
for $e \in \{0, 1\}$. For the CB-$m$-learner with $m \in \{\mathrm{RA}, \mathrm{IPW}, \mathrm{DR}\}$, we obtain
\begin{equation}
b^+_{e,j}(x) = \E[\hat{B}^{+m}_{e, j} \mid X = x]
\end{equation}
for $e, j \in \{0, 1\}$ whenever, for all $e, a \in \{0, 1\}$, one of the following cases hold: (i)~$\hat{r}^+_{\ell}(x) = {r}^+_{\ell}(x)$ for $m = \mathrm{RA}$; (ii)~$\hat{\delta}_\ell(x) = {\delta}_\ell(x)$ for $m = \mathrm{IPW}$; or (iii)~\underline{either} $\hat{r}^+_{\ell}(x) = {r}^+_{\ell}(x)$ \underline{or} $\hat{\delta}_\ell(x) = {\delta}_\ell(x)$ for $m = \mathrm{DR}$. 
\end{theorem}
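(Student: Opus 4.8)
The plan is to reduce every claim to the single algebraic identity $b^+_{e,j}(x) = r^+_e(x) - r^+_j(x)$, after which the consistency and double-robustness statements follow from the familiar pseudo-outcome unbiasedness computations, applied to the transformed outcome $\widetilde{Y}^+_{e,j}$ with the environment $E$ playing the role of a multi-valued treatment. First I would establish this identity. Conditioning on $E$ via the tower property and invoking Assumption~\ref{ass:consistency}, one computes $r^+_e(x) = \E[AY + (1-A)s_2 \mid X=x, E=e] = \pi^e_1(x)\mu^e_1(x) + (1-\pi^e_1(x))s_2$ and, symmetrically, $r^+_j(x) = \pi^j_0(x)\mu^j_0(x) + (1-\pi^j_0(x))s_1$; subtracting reproduces the upper cross-environment bound $b^+_{e,j}(x)$ of Theorem~\ref{thrm:bounds}. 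The same bookkeeping settles the WB-learner: conditioning on $E=e$ sets $\mathbbm{1}\{E=e\}=1$, and evaluating the four terms of $\hat{B}^{+\mathrm{WB}}_e$ gives $\pi^e_1(x)\mu^e_1(x) + \pi^e_0(x)s_2 - \pi^e_0(x)\mu^e_0(x) - \pi^e_1(x)s_1 = b^+_{e,e}(x)$.

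Next I would dispatch the RA- and IPW-learners under their respective single-nuisance assumptions. For the RA-learner, assuming $\hat{r}^+_\ell = r^+_\ell$, I would condition on $E$ so that $\E[\widetilde{Y}^+_{e,j}\mid X,E=e]=r^+_e(x)$ and $\E[\widetilde{Y}^+_{e,j}\mid X,E=j]=r^+_j(x)$; each of the three indicator terms then contributes $r^+_e(x)-r^+_j(x)$ weighted by $\delta_e(x)$, $\delta_j(x)$, and $1-\delta_e(x)-\delta_j(x)$ respectively, and since these weights sum to one the total is $b^+_{e,j}(x)$. For the IPW-learner, assuming $\hat{\delta}_\ell = \delta_\ell$, the inverse-propensity identity $\E[\mathbbm{1}\{E=\ell\}\widetilde{Y}^+_{e,j}/\delta_\ell(x)\mid X=x]=r^+_\ell(x)$ (again by conditioning on $E$) immediately yields the difference $r^+_e(x)-r^+_j(x)$.

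The main obstacle is the double robustness of the DR-learner, where neither nuisance is assumed correct on its own. Here I would regroup $\hat{B}^{+\mathrm{DR}}_{e,j}$ into an ``$e$-part'' and a ``$j$-part'' and show each part recovers the correct transformed response. Taking conditional expectations in the $e$-part gives
\begin{equation*}
\frac{\delta_e(x)}{\hat{\delta}_e(x)}\bigl(r^+_e(x) - \hat{r}^+_e(x)\bigr) + \hat{r}^+_e(x),
\end{equation*}
and the decisive feature is the product structure of the residual: it collapses to $r^+_e(x)$ whenever \emph{either} $\hat{r}^+_e = r^+_e$ (the second factor is zero) \emph{or} $\hat{\delta}_e = \delta_e$ (the ratio equals one). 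Either regime forces the $e$-part to equal $r^+_e(x)$; the $j$-part is symmetric and equals $r^+_j(x)$, so the difference is $b^+_{e,j}(x)$. Making this factorization explicit and verifying it under each one-sided misspecification regime is the only genuinely delicate bookkeeping; everything else is tower-property accounting plus the reduction identity of the first paragraph. I would finally remark that these population-level identities, combined with consistency of the second-stage regression $\hat{\E}$, deliver consistency of the bound estimators themselves.
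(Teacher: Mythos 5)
Your proposal is correct and follows essentially the same route as the paper's proof: compute $\E[\hat{B}^{+m}_{e,j}\mid X=x]$ by conditioning on $E$ via the tower property, identify $r^+_e(x)=\pi^e_1(x)\mu^e_1(x)+(1-\pi^e_1(x))s_2$ and $r^+_j(x)=\pi^j_0(x)\mu^j_0(x)+(1-\pi^j_0(x))s_1$ so that $b^+_{e,j}(x)=r^+_e(x)-r^+_j(x)$, and then check each nuisance-specification case. The one (minor, and valid) refinement over the paper is your handling of the DR case: rather than verifying the two global regimes ``all $\hat{\delta}_\ell$ correct'' and ``all $\hat{r}^+_\ell$ correct'' separately, you split the pseudo-outcome into an $e$-part and a $j$-part and exhibit the product-of-errors form $r^+_e(x)+\bigl(\tfrac{\delta_e(x)}{\hat{\delta}_e(x)}-1\bigr)\bigl(r^+_e(x)-\hat{r}^+_e(x)\bigr)$, which makes the double robustness transparent and even covers mixed per-environment misspecification.
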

\begin{proof}
    See Appendix~\ref{app:proofs}.
\end{proof}

\vspace{-0.1cm}

Note that the CB-DR-learner satisfies a double robustness property analogously to the standard DR-learner for CATE estimation \citep{Kennedy.2023}. In particular, it allows for a misspecification of either $\hat{r}^+_{\ell}(x)$ or $\hat{\delta}^\ell(x)$ as long as the other nuisance function is correctly specified. Furthermore, the WB-learner is always consistent because the corresponding pseudo-outcome from Eq.~\eqref{eq:wb_learner} does not depend on any nuisance estimators but just on the observational data. This is different for cross-environment bounds $b^+_{e,j}(x)$, which would require observing the same data in two different environments. In that sense, the problem of estimating the $b^+_{e,j}(x)$ gives rise to the \emph{fundamental problem of causal inference}: potential outcomes corresponding to different environments are never observed simultaneously.

\subsection{Implementation using neural networks}
\label{sec:neural}

In general, our proposed meta-learners are model-agnostic in both stages and thus work with arbitrary machine learning models. Here, we provide a flexible implementation using neural networks. Specifically, we adapt the implementations used in previous works for evaluating meta-learners for point-identified CATE estimation \cite{Curth.2021, Curth.2021c}. Importantly, we use similar network architectures and settings for all meta-learners. Thus, performance differences are decoupled from model choice, so that the source of gain is given by the meta-learners. We report the implementation details in Appendix~\ref{app:implementation}. Here, we want to emphasize that our experiments serve as a ``proof-of-demonstration'' to show the behavior of the different meta-learners in different settings, while the neural network architectures are not optimized for the respective tasks. Instead, in practice, the exact modeling choices can be flexibly adjusted to fit data properties and desired inductive biases. 

\section{Experiments}
\label{sec:experiments}

\begin{table}[]
    \centering
    \begin{tabular}{lcc}
\toprule
method     & Synthetic Data 1 & Synthetic Data 2 \\
\midrule
WB na{\"i}ve & 0.073 ± 0.031   & 0.075 ± 0.045 \\
WB           & 0.142 ± 0.069   & 0.130 ± 0.077 \\
\midrule
CB na{\"i}ve & 0.148 ± 0.098   & 0.156 ± 0.105 \\
CB-PI        & 0.125 ± 0.059   & 0.127 ± 0.063 \\
CB-RA        & 0.179 ± 0.089   & 0.119 ± 0.037 \\
CB-IPW  & \textbf{0.117 ± 0.057} & 0.165 ± 0.072 \\
CB-DR        & 0.132 ± 0.061   & \textbf{0.111 ± 0.069} \\
\bottomrule
\end{tabular}
    \caption{Mean and standard deviation of the RMSE over 5 random runs for synthetic datasets 1\&2.}
    \label{tab:synth}
\end{table}

We perform experiments on synthetic and real-world data to demonstrate that our meta-learners can effectively learn bounds in our setting with multiple environments. Synthetic data are commonly used to evaluate causal inference methods as it ensures that the causal ground-truth is available \cite{Curth.2021, Xu.2021} and thus allows us to examine the performance. Since, as in CATE estimation \cite{Curth.2021}, the performance of our different individual meta-learners depends strongly on the exact data-generating process, we refrain from heavy benchmarking. Instead, we show that our meta-learners reliably learn valid bounds and give short intuition to their behavior. 

\begin{figure*}[ht]
  \centering
\begin{subfigure}
  \centering
  \includegraphics[height=0.24\linewidth]{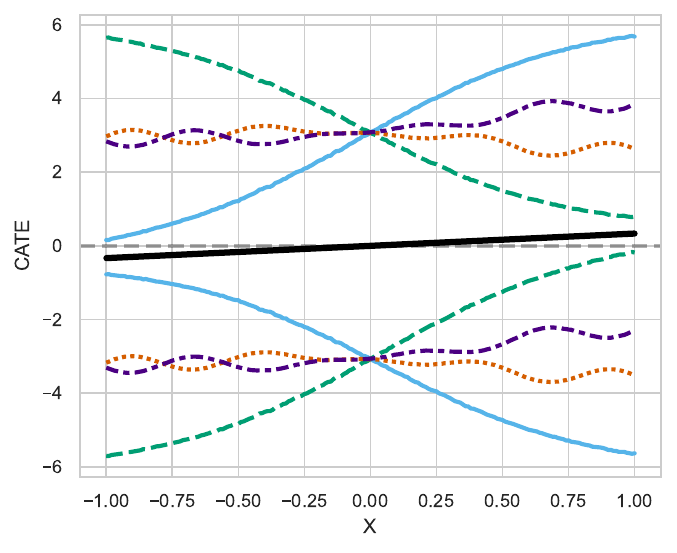}
\end{subfigure}%
\begin{subfigure}
  \centering
  \includegraphics[height=0.24\linewidth]{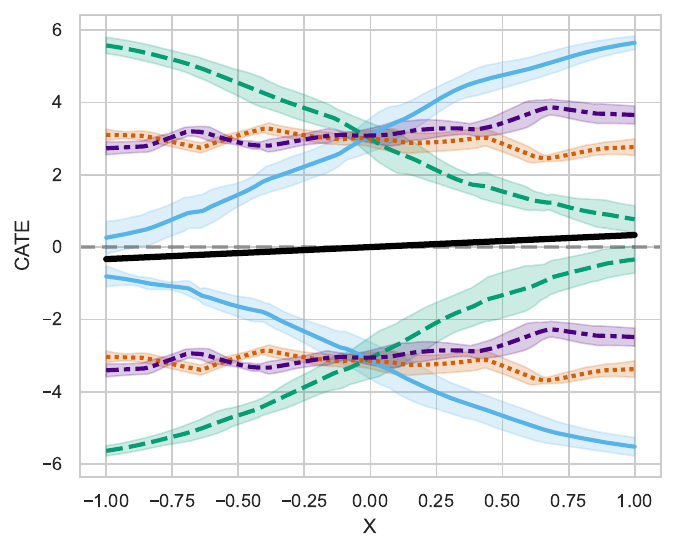}
\end{subfigure}
\begin{subfigure}
  \centering
  \includegraphics[height=0.24\linewidth]{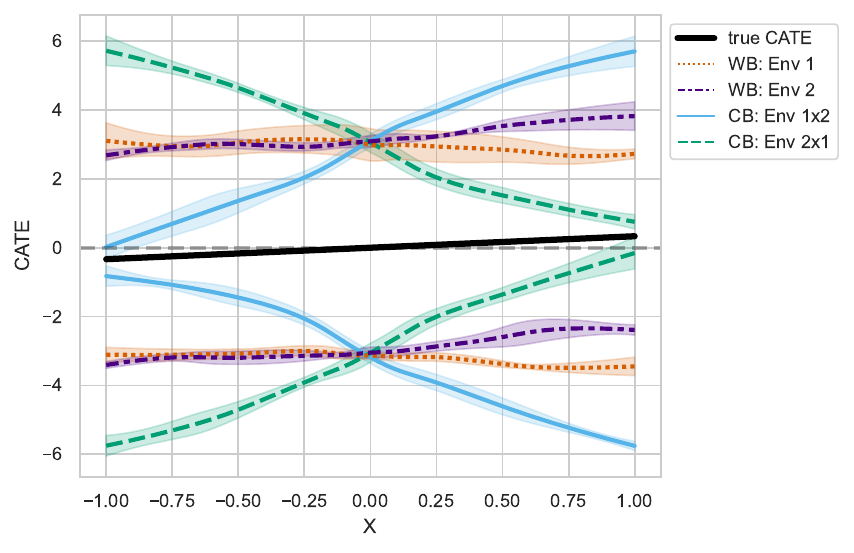}
\end{subfigure}
\vspace{-0.4cm}
\caption{Comparison of estimation methods for bounds based on synthetic dataset 2 (predicted bounds $mean \pm 3std$ over 5 runs). \textbf{Left:} Oracle bounds for within- (WB) and cross- (CB) environments. \textbf{Center:} Estimated bounds by the na{\"i}ve plug-in learner. \textbf{Right:} Estimated bounds by our two-stage meta-learners (here: WB-learner with CB-DR-learner).}
\label{fig:results_synthetic}
\vspace{-0.2cm}
\end{figure*}

\textbf{Synthetic data:} For synthetic data, unlike for real-world data, we have access to the ground-truth data-generating process. Hence, we can compare our meta-learners against the oracle CATE and the oracle bounds from Theorem~\ref{thrm:bounds} calculated with the ground-truth nuisance estimators. Here, we consider two settings where we vary the complexity of the environment probability $\delta_e(x)$ from rather simple (dataset 1) to more complex (dataset 2). Details on the data-generating mechanisms are in Appendix~\ref{app:sim}. In both cases, the treatment assignment tends to become more different across environments at the border of $X$, so that the cross-environment bounds should become tighter and more informative in these areas.

We report the results in terms of the root mean squared error (RMSE) to the oracle bounds in Table~\ref{tab:synth}. We further display illustrative insights into the true and predicted bounds of different meta-learners for dataset 2 in Figure~\ref{fig:results_synthetic}. We can observe the following: (i)~All of our meta-learners learn valid bounds reliably, as shown by low average and variation of the RMSE and comparable performances between the meta-learners. (ii)~Depending on the setting, different meta-learners may perform better or worse. As expected, in dataset 1 with simple $\delta_e(x)$, the CB-IPW learner performs best, while, in the more complex dataset 2, the CB-DR-learner shows the best performance. This is in line with previous work around meta-learners for CATE estimation \cite{Curth.2021}, while we show that the same holds true for meta-learners aimed at partial identification. (iii)~The cross-environment bounds can be especially helpful for learning tight and informative bounds, as demonstrated in the border regions of $X$ in Figure~\ref{fig:results_synthetic}. Hence, reliable estimation is particularly important in these areas, and, in the considered settings, our CB-learners help to improve performance over the na{\"i}ve plug-in learner.

Furthermore, in Figure~\ref{fig:results_synthetic}, we can give an intuition into the differences in bounds estimation between the na{\"i}ve plug-in learner and the two-stage learners. For the latter, we use the WB-learner in combination with the CB-DR-learner as a representative example for illustration. While still performing well, we can observe that the na{\"i}ve plugin-learner yields slightly less stable estimates with higher stand deviations, which is expected due to estimation errors in the nuisance estimation. The two-stage learners, in contrast, are less prone to minor errors in the nuisance estimators and, as expected, yield slightly smoother and more robust estimates of the bounds, as shown by lower stand deviations. This is also reflected by lower RMSE and lower variation in Table~\ref{tab:synth}. However, note that this can result in oversmoothing when the ground-truth bounds have a rather complex form, e.g., as for the within-environment bounds. Interestingly, as stated above, also here a similar behavior was observed for meta-learners for point identification. In sum, our results show that, depending on the data properties, different learners can be preferable for estimating bounds, and we provide a selection of different options with our novel meta-learners.

\textbf{Real-world data:} We now demonstrate the applicability of our meta-learners to real-world data. Since the ground-truth CATE is unobservable for real-world data, we refrain from benchmarking, but, instead, our primary aim is to demonstrate the practical value of our meta-learners. Here, we perform a case study using a dataset with COVID-19 hospitalizations in Brazil across different regions \cite{baqui2020ethnic}. We are interested in predicting the effect of comorbidity on the mortality of COVID-19 patients. For the environments, we use the regions of the hospitals in Brazil, which are split into North and Central-South. As observed confounders, we include age, sex, and ethnicity.

We report the estimates of the CATE bounds wrt. to age and averaged over the other confounders ($mean\pm std$), estimated by (i)~the na{\"i}ve plug-in learner and (ii)~the CB-PI-learner as another example for the two-stage learners. The results are displayed in Fig.~\ref{fig:results_covid}. As expected, for both learners, the best lower bounds are closer to zero than the best upper bounds for all ages, indicating that comorbidity has no (large) negative effect on mortality (i.e., no large positive effect on survival probability). However, given our data, we cannot prove that comorbidity has an effect on mortality at all. Further, we observe that the cross-environment bounds help to tighten the bounds. In comparison, our two-stage learners yield similar predictions to the na{\"i}ve plug-in learner, indicating robust estimation. In sum, the results demonstrate the applicability of our meta-learners to real-world data.

\begin{figure}[htb]
  \centering
\begin{subfigure}
  \centering
  \includegraphics[width=0.49\linewidth]{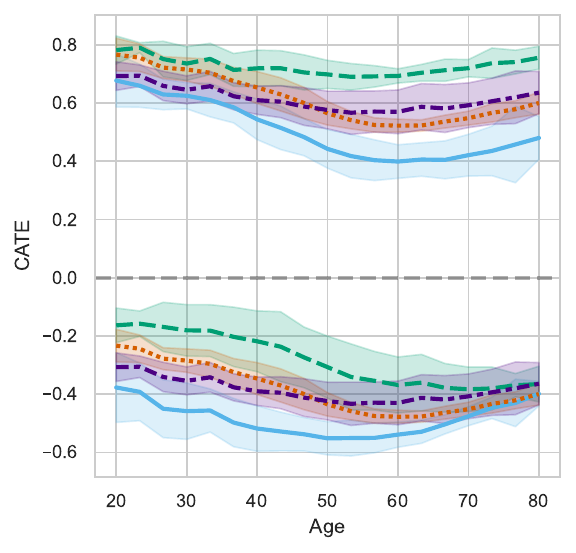}
\end{subfigure}%
\begin{subfigure}
  \centering
  \includegraphics[width=0.49\linewidth]{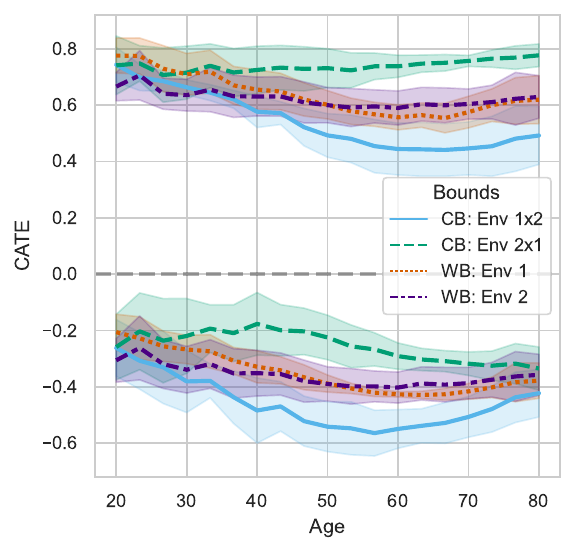}
\end{subfigure}
\vspace{-0.4cm}
\caption{Insights from real-world data: Effect of comorbidity on mortality in COVID-19 patients in Brazil. \textbf{Left:} Estimated bounds by na{\"i}ve plug-in learner. \textbf{Right:} Estimated bounds by our two-stage meta-learners (here: WB-learner and CB-PI-learner).}
\label{fig:results_covid}
\vspace{-0.2cm}
\end{figure}

\section{Applicability to IV settings}

Our meta-learners apply to any setting in which a discrete instrumental variable (IV), $E$, is available, which does not necessarily need to correspond to an environment indicator. A prominent example are {randomized controlled trials (RCTs) with non-compliance}. Note that RCTs with non-compliance do not need multiple environments but can have just a single environment. 
Then, $E$ corresponds to a randomized treatment prescription (e.g., by a physician) and $A$ indicates whether the patient complied and followed the treatment decision. Here, the randomized treatment assignment automatically renders $E$ independent of potential unobserved confounders, thus fulfilling Assumption~\ref{ass:main}. 

In settings with discrete IVs, there is again rich literature on CATE estimation using point estimates \cite{Syrgkanis.2019, Frauen.2023b}. There are also different efforts aimed at the \emph{derivation} of bounds \cite{swanson2018partial}, but methods for \emph{estimation}, such as meta-learners, are scarce. To the best of our knowledge, our work is the first to propose meta-learners for estimating such bounds.  

\section{Discussion}

\textbf{Conclusion:} In this paper, we derive tight bounds for partial identification of the CATE with data from multiple environments by showing the relation of our setting to IV settings. Further, we propose novel model-agnostic meta-learners for estimating these bounds from observational data. We find that the cross-environment bounds are especially helpful for learning tight bounds in areas where the treatment assignment policy varies clearly between the environments, and that the performance ranking of the respective meta-learners depends on the data generating process. Hence, our proposed meta-learners provide a valuable tool for tailoring the estimation of bounds for the CATE with data from multiple environments to the respective properties of different settings.

\textbf{Future directions:} We envision that our ideas could be used to develop meta-learners in other partial identification settings, for which effective estimation procedures are often lacking. Examples are meta-learners for settings with continuous instruments, leaky mediation, or sensitivity analysis. 

\clearpage

\section*{Impact statement} 

We acknowledge our meta-learners build on formal assumptions that are standard in the causal inference literature. Hence, we recommend a cautious and responsible use of our meta-learners to ensure that the assumptions are met, so that reliable inferences can be made. Even though we aim to relax standard assumptions such as unconfoundedness, we still implicitly assume that the environmental variable $E$ acts as an instrument, i.e., is not correlated with any potential unobserved confounders. This assumption can generally not be tested and must be justified by domain knowledge. However, we note that there are many real-world scenarios where this assumption holds, for example, RCTs with non-compliance \cite{Finkelstein.2012}.

Notwithstanding, our work aims to relax the (strict) assumptions of prior research by allowing for partial identification (instead of point identification). This can help to make causal inference more robust. As such, it can even help to improve the reliability of inferences for marginalized groups. For example, marginalized groups are different from the rest of the population in terms of their socio-economic status, which is a known confounder in medical studies, so that causal inference that ignores such confounder may be biased, while our meta-learners can help to make more reliable inferences.  

\bibliography{bibiography}
\bibliographystyle{icml2024}

\newpage
\appendix
\onecolumn

\section{Extended related work}
\label{app:rw}

\textbf{Causal inference using multiple datasets:} A related stream of literature focuses on causal inference from multiple datasets and can be roughly separated into works that leverage (i)~randomized or (ii)~purely observational data.

Works from (i) often leverage small amounts of randomized data in combination with observational data to obtain unbiased causal effect estimates. For example, \citet{Kallus.2018e} and \citet{Hatt.2022} use the observational data to obtain a biased CATE estimator, which is then subsequently debiased using the randomized data. Similar approaches exist for long-term effects \citep{Athey.2020, Ghassami.2022b, Imbens.2022}. An estimation theory for more general causal effects has been proposed by \citet{Jung.2023}. Finally, bounds for causal effects have been derived by \citet{Zhang.2022c} and \citet{Ilse.2023}. Note that all of these works require randomized data, while our paper relies on purely observational data.

Works from (ii) include invariant causal prediction methods \cite{Peters.2016, HeinzeDeml.2018, Mooij.2020} that aim to discover the causal predictors of a target variable. This idea has been adapted to CATE estimation to find a valid adjustment set of covariates \cite{Shi.2021}. \citet{Bareinboim.2016} proposed a theory for identifiability and transportability of causal effects across different environments under potential selection bias. Other works that leverage observational datasets from different environments include transfer learning for CATE under unconfoundedness \cite{Bica.2022} and methods for detecting unobserved confounding \cite{Karlsson.2023}. However, none of these works learns bounds for the CATE under violations of assumptions.

\textbf{Model-based CATE estimation:} Another stream of literature focuses on adapting specific machine learning models for CATE estimation. Examples include forest-based methods \cite{Wager.2018} and neural networks, which leverage shared representations across response surfaces and propensity scores to improve finite-sample performance \citep{Shi.2019, Curth.2021}. A complementary approach is to learn balanced representations that are unpredictable of the treatment \cite{Johansson.2016, Shalit.2017}. These methods are in contrast to model-agnostic meta-learners, which can be used with any \emph{arbitrary} machine learning model.

\clearpage

\section{Proofs}
\label{app:proofs}

\subsection{Proof of Lemma~\ref{lem:manski}}
\begin{proof}
The result follows from decomposing the oracle response function into a factual and counterfactual part, which can be bounded. That is,
\begin{align}
\widetilde{\mu}_{a}(x) &= \pi^e_a(x) \E[Y(a) \mid X = x, A = a] + (1 - \pi^e_a(x)) \E[Y(a) \mid X = x, A \neq a] \\
& = \pi^e_a(x) \mu^e_a(x) + (1 - \pi^e_a(x)) \E[Y(a) \mid X = x, A \neq a] \\
& \leq \pi^e_a(x) \mu^e_a(x) + (1 - \pi^e_a(x)) s_2,
\end{align}
where we used the definition of $s_1$ and $s_2$ as the support boundary of $Y$.
\end{proof}

\subsection{Proof of Theorem~\ref{thrm:bounds}}
\begin{proof}
Note that 
\begin{equation}
    \tau_{a_1, a_2}(x) = \widetilde{\mu}_{a_1}(x) - \widetilde{\mu}_{a_2}(x) .
\end{equation}
Hence, Lemma~\ref{lem:manski} and Assumptions~\ref{ass:consistency} and \ref{ass:main} imply that
\begin{equation}
 b^-_{e,j}(x)  \leq \tau_{a_1, a_2}(x) \leq b^+_{e,j}(x)
\end{equation}
for all environment $e$ and $j$. Hence, taking the minimum and maximum over $e$ and $j$ yields the result.
\end{proof}

\subsection{Proof of Theorem~\ref{thrm:consistency}}
\begin{proof}
Without loss of generality, we show the result for the upper bounds. For lower bounds, the proof works analogously by interchanging the support points $s_1$ and $s_2$. We proceed by calculating $\E[\hat{B}^+_{e, j} \mid X = x]$ for each pseudo-outcome $\hat{B}^+_{e, j}$, which corresponds to an oracle second stage regression. We start with the WB-learner, which uses pseudo-outcomes defined via
\begin{equation}
    \hat{B}^{+\mathrm{WB}}_{e} = \mathbbm{1}\{E=e\} \left(A Y + (1-A)s_2 - (1 - A)Y - A s_1 \right).
\end{equation}
Hence, we obtain
\begin{align}
    \E[\hat{B}^{+\mathrm{WB}}_{e} \mid X = x, E = e] &= \E[AY \mid X = x, E = e] + s_2 \E[1-A \mid X = x, E = e] \\ 
    &\quad- \E[(1-A)Y \mid X = x, E = e] - s_1 \E[A \mid X = x, E = e]\\
    & = \pi_1^e(x) \mu^e_{1}(x) + \pi_0^e(x) s_2  - \pi_0^e(x) \mu^e_{0}(x) - \pi_1^e(x) s_1 = b^+_{e,e}(x) .
\end{align}
as desired. The CB-RA-learner uses pseudo-outcomes defined via
\begin{align}
    \hat{B}^{+\mathrm{RA}}_{e, j} &= \mathbbm{1}\{E=e\}\left(\widetilde{Y}^+_{e,j} - \hat{r}^+_{j}(x) \right)  
    + \mathbbm{1}\{E=j\} \left(\hat{r}^+_{e}(x) -\widetilde{Y}^+_{e,j}\right) \\ &+ \mathbbm{1}\{E\neq e\} \mathbbm{1}\{E\neq j\} \left(\hat{r}^+_{e}(x) - \hat{r}^+_{j}(x) \right) .
\end{align}
Taking expectation conditional on $X = x$ yields
\begin{align}
    \E[\hat{B}^{+\mathrm{RA}}_{e, j} \mid X = x] &= \delta_e(x) \left(\E[AY \mid X = x, E = e] + s_2 \E[1-A \mid X = x, E = e] - \hat{r}^+_{j}(x)\right) \\
    &\quad + \delta_j(x) \left(\hat{r}^+_{e}(x) - \E[(1-A)Y \mid X = x, E = j] - s_1 \E[A \mid X = x, E = j] \right)\\
    & \quad + (1 - \delta_e(x) - \delta_j(x))\left( \hat{r}^+_{e}(x) - \hat{r}^+_{j}(x)\right)\\
    &= \delta_e(x) \left(\pi_1^e(x) \mu^e_{1}(x) + \pi_0^e(x) s_2 -\hat{r}^+_{j}(x) \right) \\
    &\quad + \delta_j(x) \left(\hat{r}^+_{e}(x) - \pi_0^j(x) \mu^j_{0}(x) - \pi_1^j(x) s_1 \right)\\
    &\quad + (1 -\delta_e(x) - \delta_j(x)) \left(\hat{r}^+_{e}(x) - \hat{r}^+_{j}(x) \right) .
\end{align}
Hence, whenever $\hat{r}^+_{e}(x) = {r}^+_{e}(x) = \pi_1^e(x) \mu^e_{1}(x) + \pi_0^e(x) s_2$ and $\hat{r}^+_{j}(x) = {r}^+_{j}(x) = \pi_0^j(x) \mu^j_{0}(x) - \pi_1^j(x) s_1 $, we obtain
\begin{equation}
\E[\hat{B}^{+\mathrm{RA}}_{e, j} \mid X = x] = \delta_e(x) b^+_{e,j}(x) + \delta_j(x) b^+_{e,j}(x) + + (1 - \delta_e(x) - \delta_j(x)) b^+_{e,j}(x) = b^+_{e,j}(x).
\end{equation}

The pseudo-outcomes of the CB-IPW-learner are defined via
\begin{equation}
    \hat{B}^{+\mathrm{IPW}}_{e, j} = \frac{\mathbbm{1}\{E=e\}}{\hat{\delta}_e(x)} \left(AY + (1-A)s_2\right)
     - \frac{\mathbbm{1}\{E=j\}}{\hat{\delta}_j(x)} \left((1-A)Y + A s_1\right),
\end{equation}
which yields
\begin{align}
    \E[\hat{B}^{+\mathrm{IPW}}_{e, j} \mid X = x] &= \frac{\delta_e(x)}{\hat{\delta}_e(x)} \left( \E[AY \mid X = x, E = e] + s_2 \E[1-A \mid X = x, E = e] \right) \\
    &\quad - \frac{\delta_j(x)}{\hat{\delta}_j(x)} \left(\E[(1-A)Y \mid X = x, E = j] + s_1 \E[A \mid X = x, E = j] \right) \\
    &= \frac{\delta_e(x)}{\hat{\delta}_e(x)} \left(\pi_1^e(x) \mu^e_{1}(x) + \pi_0^e(x) s_2 \right) - \frac{\delta_j(x)}{\hat{\delta}_j(x)} \left(\pi_0^j(x) \mu^j_{0}(x) + \pi_1^j(x) s_1 \right).
\end{align}
Hence, $\hat{\delta}_\ell(x) = \delta_\ell(x)$ implies
\begin{equation}
    \E[\hat{B}^{+\mathrm{IPW}}_{e, j} \mid X = x] = b^+_{e,j}(x).
\end{equation}

Finally, the pseudo outcomes of the CB-DR-learner are defined via
\begin{equation}
    \hat{B}^{+\mathrm{DR}}_{e, j} = \hat{B}^{+\mathrm{IPW}}_{e, j} + \left(1 - \frac{\mathbbm{1}\{E=e\}}{\hat{\delta}_e(x)}\right) \hat{r}^+_{e}(x) - \left(1 - \frac{\mathbbm{1}\{E=j\}}{\hat{\delta}_j(x)}\right) \hat{r}^+_{j}(x) .
\end{equation}
Again, by taking expectation conditional on $X = x$, we obtain
\begin{align}
    \E[\hat{B}^{+\mathrm{DR}}_{e, j} \mid X = x] 
    &= \frac{\delta_e(x)}{\hat{\delta}_e(x)} \left(\pi_1^e(x) \mu^e_{1}(x) + \pi_0^e(x) s_2 \right) - \frac{\delta_j(x)}{\hat{\delta}_j(x)} \left(\pi_0^j(x) \mu^j_{0}(x) + \pi_1^j(x) s_1 \right) \\
    & \quad + \left(1 - \frac{\delta_e(x)}{\hat{\delta}_e(x)} \right) \hat{r}^+_{e}(x)  - \left(1 - \frac{\delta_j(x)}{\hat{\delta}_j(x)} \right) \hat{r}^+_{j}(x) . \\
\end{align}
 Under $\hat{\delta}_\ell(x) = \delta_\ell(x)$, this reduces to
\begin{align}
    \E[\hat{B}^{+\mathrm{DR}}_{e, j} \mid X = x]  &= \left(\pi_1^e(x) \mu^e_{1}(x) + \pi_0^e(x) s_2 \right) - \left(\pi_0^j(x) \mu^j_{0}(x) + \pi_1^j(x) s_1 \right)  + 0 \, \hat{r}^+_{e}  - 0 \, \hat{r}^+_{j} \\
    &= b^+_{e,j}(x).
\end{align}
Under $\hat{r}^+_{e}(x) = {r}^+_{e}(x) = \pi_1^e(x) \mu^e_{1}(x) + \pi_0^e(x) s_2$ and $\hat{r}^+_{j}(x) = {r}^+_{j}(x) = \pi_0^j(x) \mu^j_{0}(x) - \pi_1^j(x) s_1 $, the expression reduces to
\begin{align}
    \E[\hat{B}^{+\mathrm{DR}}_{e, j} \mid X = x] 
    &= \left(1 + \frac{\delta_e(x)}{\hat{\delta}_e(x)} - \frac{\delta_e(x)}{\hat{\delta}_e(x)} \right) \left(\pi_1^e(x) \mu^e_{1}(x) + \pi_0^e(x) s_2 \right) - \left(1 + \frac{\delta_j(x)}{\hat{\delta}_j(x)} - \frac{\delta_j(x)}{\hat{\delta}_j(x)} \right)\left(\pi_0^j(x) \mu^j_{0}(x) + \pi_1^j(x) s_1 \right) \\ &= b^+_{e,j}(x),
\end{align}
which proves the result.
\end{proof}

\clearpage

\section{Additional Insights}
In Figures~\ref{fig:results_synthetic} and \ref{fig:results_covid}, we plot the learned bounds for (i) both within-environment bounds and (ii) both cross-environment bounds to demonstrate the general performance of our learners. However, in practice, we are usually only interested in the \emph{tightest} bounds as stated in Theorem~\ref{thrm:bounds}. Hence, for completeness, we plot the learned tightest bounds from Figures~\ref{fig:results_synthetic} and \ref{fig:results_covid} separately in Figures~\ref{fig:results_synthetic_tightest} and \ref{fig:results_covid_tightest}. We observe the same patterns for the tightest bounds as for all possible bounds: For the synthetic dataset, the CB-DR-learner results in valid and slightly more stable estimates with lower variation between runs compared to the na{\"i}ve plugin learner. 

For the real-world dataset, where we cannot compare with ground-truth bounds, both selected learners yield similar estimates. Also, in both considered settings, almost solely our proposed cross-environment bounds result in the tightest bounds, while the within-environment bounds are not helpful for yielding informative bounds. Here, this implies that one should focus on proper estimation of the CB-bounds, which can be done by our different proposed two-stage CB-meta-learners. In sum, this demonstrates that both contributions, (i) the bounds making use of data from different environments, and (ii) the meta-learners, are useful for learning informative bounds robustly.

\begin{figure}[ht]
  \centering
\begin{subfigure}
  \centering
  \includegraphics[height=0.24\linewidth]{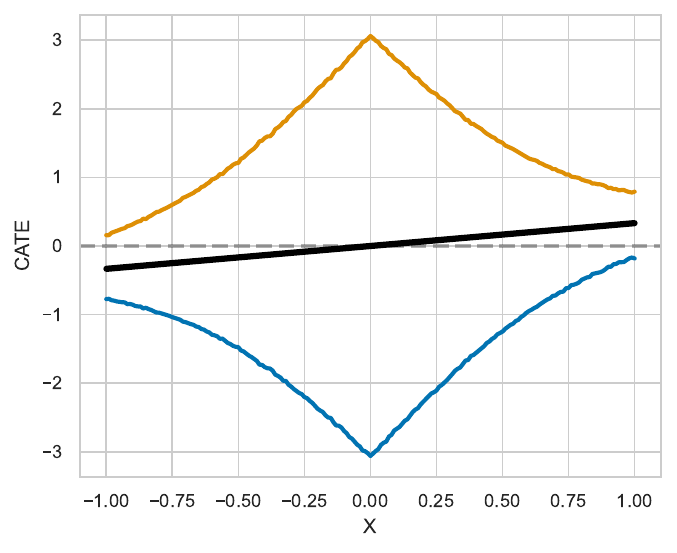}
\end{subfigure}%
\begin{subfigure}
  \centering
  \includegraphics[height=0.24\linewidth]{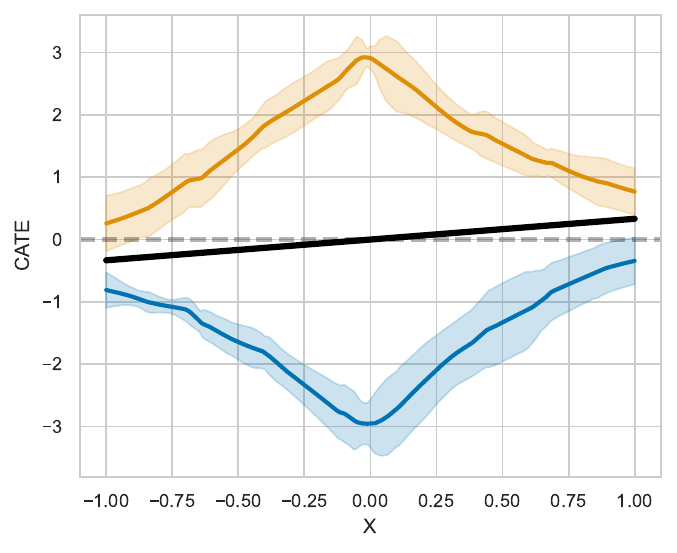}
\end{subfigure}
\begin{subfigure}
  \centering
  \includegraphics[height=0.24\linewidth]{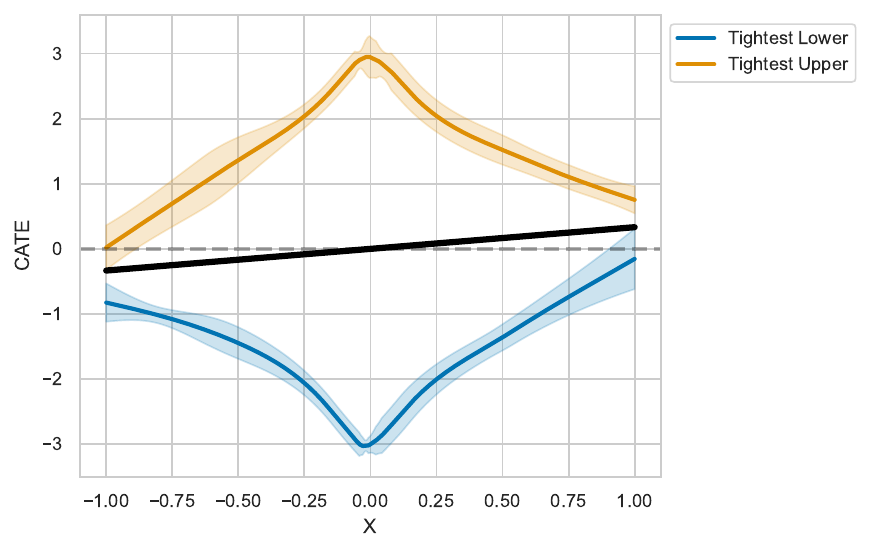}
\end{subfigure}
\vspace{-0.4cm}
\caption{Comparison of estimation methods for \emph{tightest} predicted bounds based on synthetic dataset 2 (predicted bounds $mean \pm 3std$ over 5 runs). \textbf{Left:} Oracle bounds for within- (WB) and cross- (CB) environments. \textbf{Center:} Estimated bounds by the na{\"i}ve plug-in learner. \textbf{Right:} Estimated bounds by our two-stage meta-learners (here: WB-learner with CB-DR-learner).}
\label{fig:results_synthetic_tightest}
\vspace{-0.2cm}
\end{figure}

\begin{figure}[ht]
  \centering
\begin{subfigure}
  \centering
  \includegraphics[width=0.3\linewidth]{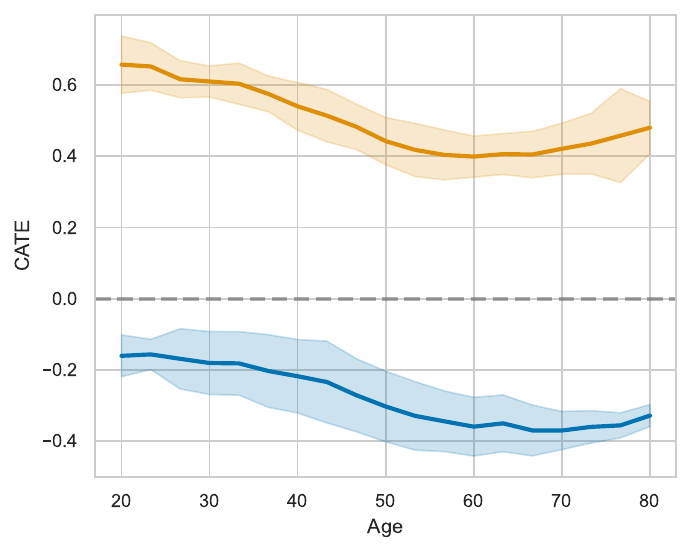}
\end{subfigure}%
\begin{subfigure}
  \centering
  \includegraphics[width=0.3\linewidth]{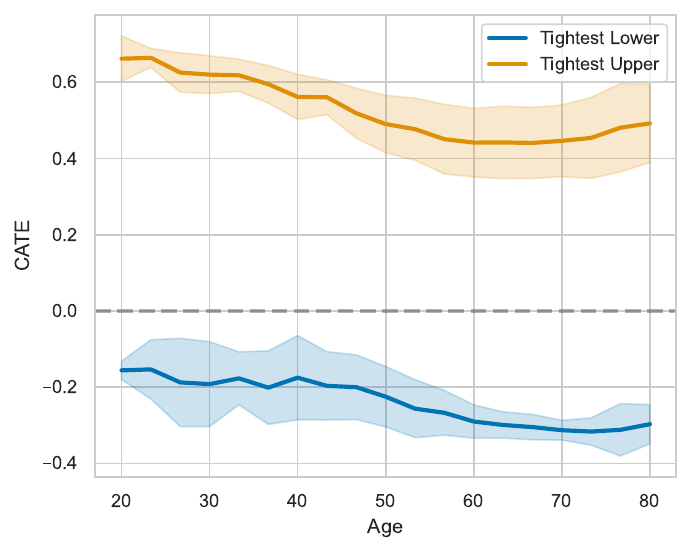}
\end{subfigure}
\vspace{-0.4cm}
\caption{Insights from real-world data: Effect of comorbidity on mortality in COVID-19 patients in Brazil. \textbf{Left:} Estimated \emph{tightest} bounds by na{\"i}ve plug-in learner. \textbf{Right:} Estimated \emph{tightest} bounds by our two-stage meta-learners (here: WB-learner and CB-PI-learner).}
\label{fig:results_covid_tightest}
\vspace{-0.2cm}
\end{figure}

\clearpage

\section{Implementation details}\label{app:implementation}

\textbf{Model architecture and parameters:} For our experiments, we adapt the implementations used in previous works for evaluating meta-learners for point-identified CATE estimation \cite{Curth.2021, Curth.2021c}. Importantly, we use similar network architectures and parameter settings for all meta-learners. Thus, performance differences are decoupled from model choice and can be explained by the meta-learners as the source of gain. In detail, we use the software package \url{https://github.com/AliciaCurth/CATENets} and all of the default settings of the PyTorch CATE meta-learners provided in this package. Here, the networks for the first- and second-stage models are simple MLPs with 2 hidden layers and hidden neuron size of 100. For the nuisance function estimation of our na{\"i}ve plugin estimator, we implement similar architectures.

\textbf{Implementation:} By using our pseudo-outcomes for bound estimation as described in Sec.~\ref{sec:two-stage-learners}, we can learn the bounds as follows: For the CB-learners, we define the pseudo-outcomes of Eq.~\eqref{eq:cb-pseudo-outcome} for each possible environment combination $e, j$ and for upper and lower bounds, respectively. Then, by simply replacing the factual outcomes with the pseudo-outcomes, and the treatment assignment with the environment assignment, we can train the respective two-stage-learners by using the above-mentioned implementation of the CATE meta-learners. We train an own meta-learner for each environment combination and each upper and lower bound. For the WB-learner, we simply directly learn models to predict the pseudo-outcomes of Eq.~\eqref{eq:wb_learner} for each environment $e$, and upper and lower bound, respectively. We provide a wrapper module for our bound meta-learners in our code.\footnote{\url{https://github.com/JSchweisthal/BoundMetaLearners}}

\clearpage

\section{Details regarding simulated data}\label{app:sim}

\textbf{Data-generating process:} For our synthetic dataset 1, we simulate an observed confounder $X \sim \mathrm{Uniform}[-1, 1]$ and an unobserved confounder $U \sim \mathrm{Uniform}[0, 1]$. We  define $\delta(x) = \mathbb{P}(E=1 \mid x) = \sigma(x)$, where $\sigma(\cdot)$ denotes the sigmoid function $\sigma(x) = \frac{1}{1+exp(-x)}$. Then, we simulate the environment $E$ via
\begin{equation}
E= 1 \mid X = x\sim \textrm{Bernoulli}\left(p= \delta(x))\right).
\end{equation}
We then sample our treatment assignments from the environment-specific propensity scores as 
\begin{equation}
A= 1 \mid X = x, U = u, E=1 \sim \textrm{Bernoulli}\left(p= \sigma\left(2.5x + u\right) \right),
\end{equation} 
and
\begin{equation}
A= 1 \mid X = x, U = u, E=0 \sim \textrm{Bernoulli}\left(p= \left(1- \sigma\left(2.5x + u\right) \right) \right) .
\end{equation}

We define the treatment effect as 
\begin{equation}
\tau(x) = \frac{1}{3} x .
\end{equation}

Finally, we simulate a continuous outcome 
\begin{equation}
Y = \tau(X) A +  \frac{1}{3}(\text{sin}(12X)+X) + \frac{1}{60} \text{ cos}(2X)  + U + 0.3\varepsilon,
\end{equation}
where $\varepsilon \sim \text{Laplace}(0, 1)$.

For our synthetic dataset 2, we model a more complex environment probability. Here, we model $\delta(x) = 0.15 \, \text{sin}(5x)+0.5$, while keeping the remaining process unchanged, such that we can isolate and show the effect of the environment probability on the performance of the different meta-learners.

To create the simulated data used in Sec.~\ref{sec:experiments}, for both datasets, we sample $n=10000$ from the data-generating process above. We then split the data into train (70\%), val (10\%), and test (20\%) sets.

\clearpage

\section{Details regarding real-world data}\label{app:real}
 We perform a case study using a dataset with COVID-19 hospitalizations in Brazil across different regions \cite{baqui2020ethnic}. We are interested in predicting the effect of comorbidity on the mortality of COVID-19 patients. For the environments, we use the regions of the hospitals in Brazil, which are split into North and Central-South. As observed confounders, we include age, sex, and ethnicity. Further, we exclude patients younger than 20 or older than 80 years. To define comorbidity as a binary variable, we define comorbidity as 1 if at least one of the following conditions were diagnosed for the patient: cardiovascular diseases, asthma, diabetes, pulmonary disease, immunosuppression,
       obesity, liver diseases, neurological disorders, renal disease.
 We then split the
data into train (70\%), val (10\%), and test (20\%) sets. For training, we use the same settings as reported in Appendix~\ref{app:implementation}.

\end{document}